%% file: example_paper.tex
\documentclass{article}
\input{math_commands.tex}

\usepackage{microtype}
\usepackage{graphicx}
\usepackage{subcaption}
\usepackage{booktabs}
\usepackage{amsthm}
\usepackage{diagbox}
\usepackage{makecell}
\usepackage{float}
\usepackage{thm-restate}
\usepackage{bm}
\usepackage{hyperref}
\usepackage{url}

\usepackage[accepted]{icml2026}

\usepackage{amsmath}
\usepackage{amssymb}
\usepackage{mathtools}

\usepackage[capitalize,noabbrev]{cleveref}

\theoremstyle{plain}

\theoremstyle{definition}

\theoremstyle{remark}

\usepackage[textsize=tiny]{todonotes}
\usetikzlibrary{arrows.meta}

\icmltitlerunning{Capacity-Aware Mixture Law Enables Efficient LLM Data Optimization}

\begin{document}

\twocolumn[
  \icmltitle{Capacity-Aware Mixture Law Enables Efficient LLM Data Optimization}



  \icmlsetsymbol{equal}{*}

  \begin{icmlauthorlist}
    \icmlauthor{Jingwei Li}{equal,tsinghua,sqz}
    \icmlauthor{Xinran Gu}{equal,tsinghua,sqz}
    \icmlauthor{Jingzhao Zhang}{tsinghua,sqz}
  \end{icmlauthorlist}

\icmlaffiliation{tsinghua}{Institute for Interdisciplinary Information Sciences,
Tsinghua University}
\icmlaffiliation{sqz}{Shanghai Qizhi Institute}

  \icmlcorrespondingauthor{Jingwei Li}{ljw22@mails.tsinghua.edu.cn}
\icmlcorrespondingauthor{Jingzhao Zhang}{jingzhaoz@mail.tsinghua.edu.cn}

  \icmlkeywords{Machine Learning, ICML}

  \vskip 0.3in
]



\printAffiliationsAndNotice{\icmlEqualContribution}

\begin{abstract}
A data mixture refers to how different data sources are combined to train large language models, and selecting an effective mixture is crucial for optimal downstream performance. Existing methods either conduct costly searches directly on the target model or rely on mixture scaling laws that fail to extrapolate well to large model sizes.
We address these limitations by introducing a compute-efficient pipeline for data mixture scaling. First, we propose CAMEL, a capacity-aware mixture law that models validation loss with the nonlinear interplay between model size and mixture. We also introduce a loss-to-benchmark prediction law that estimates benchmark accuracy from validation loss, enabling end-to-end performance prediction for the target model. Next, we study how to allocate a fixed compute budget across model scales to fit the law and reduce prediction error. Finally, we apply our method to Mixture-of-Experts models with up to 7B-A150M parameters to fit the law, and verify the optimal mixture derived from the law by extrapolating to a 55B-A1.2B target model. Compared to prior methods, we reduce mixture optimization costs by 50\% and improves downstream benchmark performance by up to 3\%.
\end{abstract}

\input{section/introduction}

\input{section/establish}

\input{section/compute}

\input{section/experiment}

\input{section/related_work}

\input{section/conclusion}

\section*{Impact Statement}
This paper presents work aimed at advancing the field of machine learning by optimizing data mixtures for large language models (LLMs). By developing a more efficient and scalable method to determine the best mixture of domain-specific data, we reduce the computational cost required to achieve high benchmark performance. Our approach not only offers significant improvements in performance but also reduces the resources needed for model training, which could lead to better applications of LLMs, such as solving math problems, providing coding assistance, and knowledge retrieval.

\bibliography{example_paper}
\bibliographystyle{icml2026}

\newpage
\appendix
\onecolumn

\input{supp_section/details}

\input{supp_section/experiment_of_sec3}

\input{supp_section/experiment_of_sec5}

\input{supp_section/theory}

\end{document}

%% file: math_commands.tex
\usepackage{amsmath,amsfonts,bm}
\usepackage{tikz}
\usetikzlibrary{arrows.meta, positioning, calc}

\def\1{\bm{1}}

\def\tvm{{\bm{\tilde{m}}}}

\def\vr{{\bm{r}}}

\def\vt{{\bm{t}}}

\DeclareMathAlphabet{\mathsfit}{\encodingdefault}{\sfdefault}{m}{sl}
\SetMathAlphabet{\mathsfit}{bold}{\encodingdefault}{\sfdefault}{bx}{n}

\newcommand{\tim}{\tilde{m}}
\newcommand{\Ltrain}{\mathcal{L}_{\mathrm{train}}}
\newcommand{\Lval}{\mathcal{L}_{\mathrm{val}}}
\newcommand{\Lb}{\mathcal{L}_{\mathrm{b}}}



%% file: section/introduction.tex
\section{Introduction}
Large language models (LLMs) are typically pretrained on mixtures of diverse domains, including general knowledge, code, math, and multilingual content~\citep{doddapaneni2025primer, li2023starcoder, dubey2024llama, team2023gemini, taylor2022galactica}. While the pretraining phase usually utilizes all available data without explicit tradeoffs, the scenario changes during \emph{mid-training} phase~\citep{sun2020ernie,yildiz2024investigating,luo2025empirical,mendieta2023towards}. Mid-training is a crucial period when the model quickly builds essential skills such as abstract reasoning, math problem solving, coding, and planning~\citep{blakeney2024does,colombo2024saullm,chen2023meditron, hu2024minicpm}. In cases like mid-training, data quality is prioritized over data quantity, as model ability can be easily weakened when mixed with low-quality, high-volume web text.  
Consequently, the allocation of domain-specific data becomes critical, and the mixture ratios significantly influence downstream performance. However, exhaustive exploration of these ratios is computationally expensive; hence,  efficiently  identifying optimal data mixtures is essential.

\begin{figure}[t]
\vspace{-0.1cm}
\centering
\includegraphics[width=0.38\textwidth]{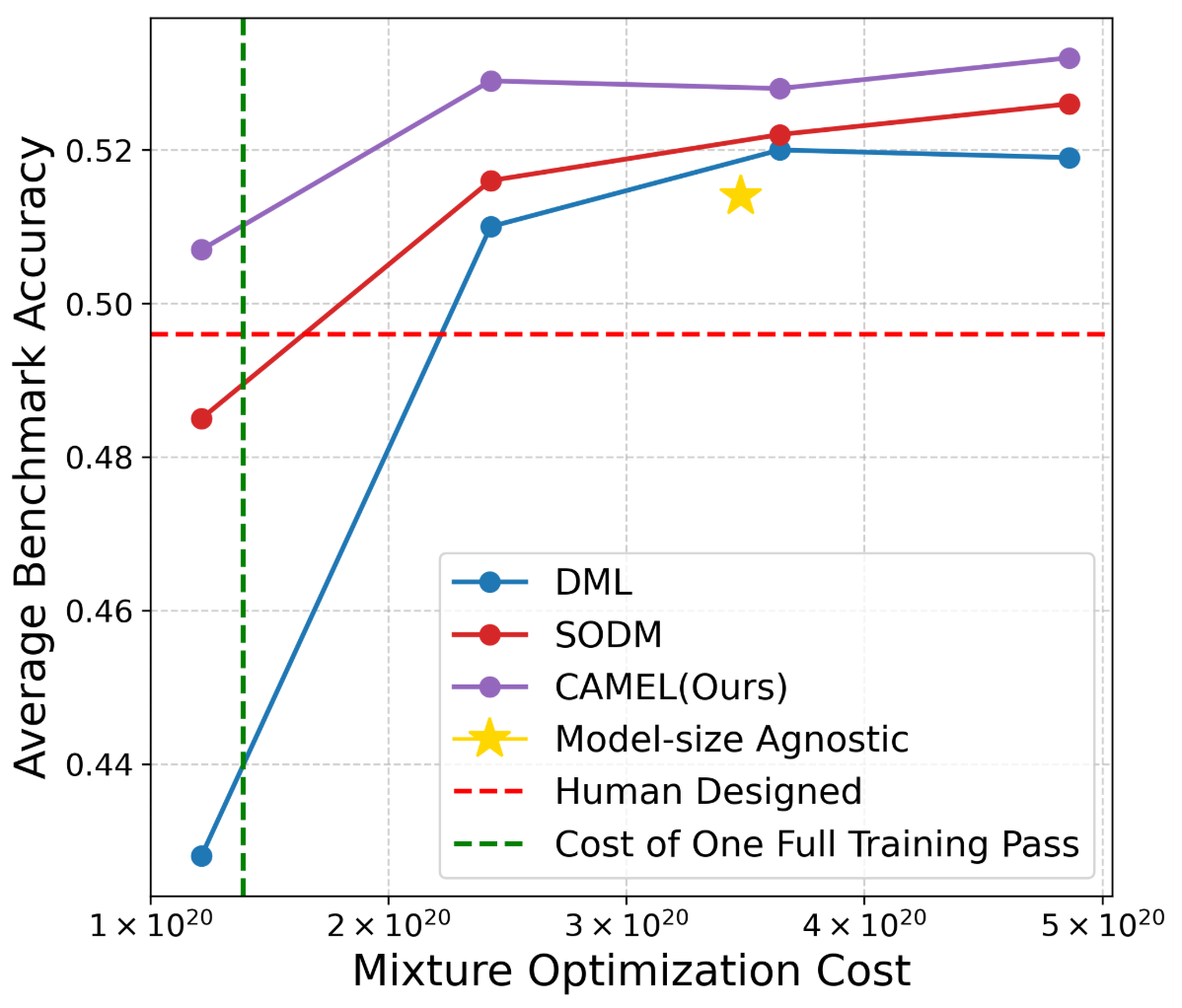}
\caption{\textbf{Mixture optimization on the target model under different compute budgets.}
We evaluate different mixture extrapolation methods by applying them to a larger target model with varying training FLOPS as mixture optimization costs. CAMEL, our proposed method, identifies high-quality data mixtures with even less than the cost of one full training pass on the target model. As the optimization budget increases, CAMEL achieves higher average benchmark accuracy than baseline methods while using less than 50\% of the compute cost of the baseline.}

\label{fig:teaser}
\vspace{-0.3cm}
\end{figure}

\begin{figure*}[t]\centering
\includegraphics[width=0.96\textwidth]{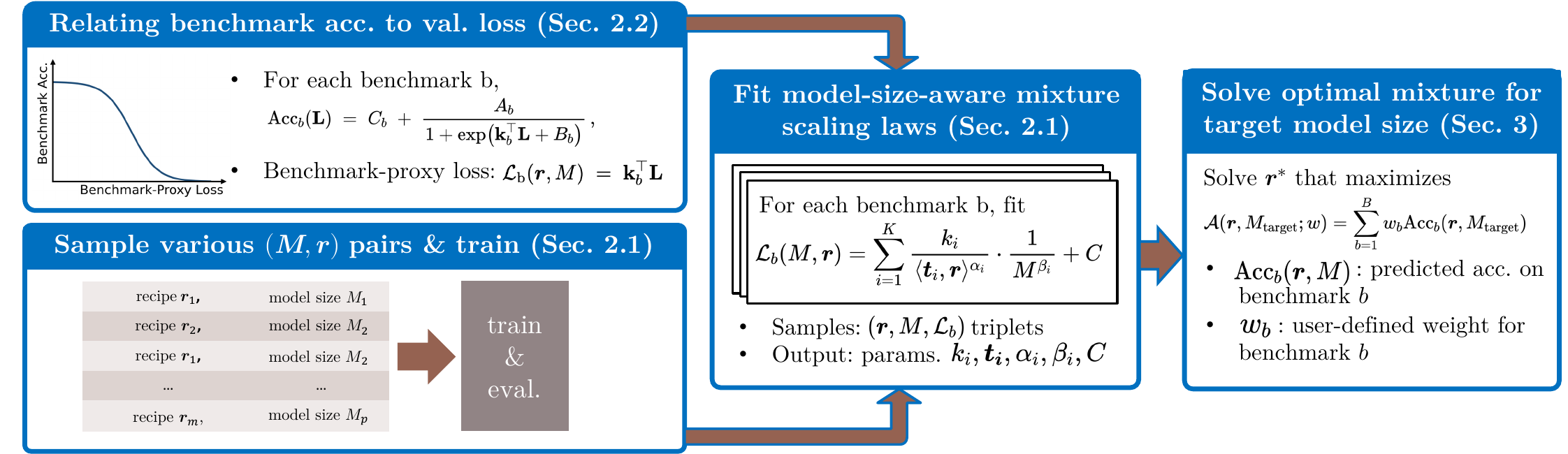}
\caption{\textbf{End-to-end framework for data mixture extrapolation under model scaling.}
We first fit a loss-to-benchmark mapping to relate validation loss to downstream benchmark accuracy (Section~\ref{sec:benchmark}). We then model validation loss as a function of model size and data mixtures using sampled $(M, r)$ pairs from smaller models (Section~\ref{sec:model_size}). These components together enable extrapolation to large models and direct optimization of data mixtures for target-scale performance (Section~\ref{sec:extrapolate}).}
\label{fig:pipeline} \vspace{-0.3cm}
\end{figure*}

A growing body of work has explored data mixture optimization. Pioneering works search for optimal mixtures on small proxy models and then directly apply the resulting mixtures to larger models~\citep{xie2023doremi,fan2024doge,liu2024regmix}. However, \citet{magnusson2025datadecide, gu2025datamixinginducephase, mizrahi2025language} show that mixtures optimized on small models do not necessarily transfer well to larger models, highlighting the need to explicitly account for model size in mixture design. To address this issue, another line of work adapts the scaling law framework to data mixture design by modeling the relationship between mixture and validation loss~\citep{ye2025data, kang2024autoscale, gu2024cmr, que2024d, shukor2025scalinglawsoptimaldata}. However, few studies empirically validate the effectiveness of these predicted optimal mixtures when extrapolated to much larger models, such as those with up to 50B parameters.

Inspired by prior studies, this work aims to develop a practical procedure for optimizing data mixtures in LLM mid-training, in order to maximize downstream performance under minimal computation cost. We analyze the mixture optimization pipeline and propose targeted improvements.

First, we model data mixture optimization as a capacity allocation problem, following the perspective of~\citep{gu2025datamixinginducephase}. Solving this problem yields a data mixture scaling law that depends jointly on mixture ratios and model size. This contrasts with prior formulations that factor these effects into separable terms~\citep{ye2025data, kang2024autoscale}. The resulting law provides a more accurate description of mixture effects across model scales and reduces validation loss prediction error (Figure~\ref{fig:formula}).

Second, while existing laws focus primarily on optimizing the validation loss (e.g.~\citep{ye2025data,kang2024autoscale,ge2024bimix}), the loss does not always align with downstream performance~\citep{lourie2025unreliable}. Building on recent studies that link validation loss to benchmark accuracy~\citep{dubey2024llama, gadre2024language, magnusson2025datadecide, chen2026olmix}, we introduce a loss-to-benchmark prediction law that maps multiple validation losses to each benchmark’s downstream performance, achieving accurate benchmark prediction in practice (Figure~\ref{fig:benchmark}).

Beyond scaling laws, we systematically investigate the sampling strategies under a limited compute budget. Unlike previous works that rely on fixed sampling heuristics, we analyze how to optimally allocate limited compute resources across different model sizes. Our findings indicate that allocation is critical; specifically, we derive an \emph{hourglass} strategy with prioritizing samples at the smallest and largest model scales while reducing intermediate scales. As shown in Figure~\ref{fig:policy}, this strategy consistently minimizes extrapolation error than other sampling strategies.

Finally, we apply our framework to train a larger model(55B-A1.2B).
As shown in Figure~\ref{fig:teaser}, CAMEL identifies strong data mixtures using less than the cost of one full training pass on the target model, and achieves higher average benchmark accuracy than baseline methods at much lower cost.

Figure~\ref{fig:pipeline} illustrates the overall pipeline of our approach. 
In summary, our contributions are as follows:

\begin{itemize}
\item \textbf{Capacity-Aware Mixture Scaling Laws.}
We derive a loss-prediction mixture law that jointly depends on mixture ratios and model size, enabling more accurate prediction than baseline methods.
We further extend the law to directly predict benchmark accuracy.

\item \textbf{Compute-Aware Experiment Design.}
We propose an \emph{hourglass} sample strategy for parameter estimation of the mixture scaling law. This strategy reduces prediction error under the same compute budget.

\item \textbf{Verification by Extrapolating to Large Models.}
We evaluate the optimal mixtures on larger models (up to 55B-A1.2B) and show that the mixtures derived by our law consistently achieve higher benchmark accuracy than baseline methods with less compute.
\end{itemize}

%% file: section/establish.tex
\section{ Data Mixture Scaling Laws}
\label{sec:law}
In this section, we introduce the design for our proposed data mixture scaling laws. We begin by formalizing the setup and introducing basic definitions. 
We consider training models with data from $n$ domains $\mathcal{D}_1,\ldots,\mathcal{D}_n$. A random sample $x$ is drawn from domain $\mathcal{D}_i$ with probability $r_i$, leading to the mixture distribution
\[
p(x \mid \vr) = \sum_{i=1}^n r_i \, p(x \mid \mathcal{D}_i),
\]
where $\vr=(r_1,\ldots,r_n)$ lies on the probability simplex $\Delta^{n-1}=\{r\in\mathbb{R}^n_{\ge 0}:\sum_i r_i=1\}$.

Given a mixture $\vr$, the model is trained by drawing samples from $p(\cdot \mid \vr)$ and minimizing the expected next-token prediction loss
\[
\min_{\theta} \; \mathbb{E}_{x \sim p(\cdot \mid \vr)} \big[-\log p_{\theta}(x)\big],
\]
which produces parameters $\theta(\vr)$.  
We then evaluate these parameters on a held-out validation distribution $\mathcal{V}$, and denote the resulting loss as a function of $\vr$:
\[
\Lval(\vr) \;=\; \mathbb{E}_{x \sim \mathcal{V}} \big[-\log p_{\theta(\vr)}(x)\big].
\]

Empirical scaling laws provide a general framework for modeling generalization performance, 
~\citep{kaplan2020scaling, henighan2020scaling, hoffmann2022training}. 
Similarly, the scaling-law framework can be extended to data mixture design, enabling the prediction of optimal mixture ratios~\citep{ye2025data,  fan2024doge, liu2024regmix, shukor2025scalinglawsoptimaldata}. Among these approaches, \citet{ye2025data} propose a representative Data Mixing Law (DML) that models validation loss as a function of mixture ratios. For a fixed model size and training step with $k$ intrinsic domains, the validation loss is formulated as:
\[
\Lval(\vr) \;=  \sum^k_{i=1}S_i\left[C_i + K_i \exp\!\left(\sum_{j=1}^n t_{ij} r_j\right)\right] \,,
\]
where $S_i, C_i,K_i,t_{ij}$ are learnable parameters.
If this law can achieve a small estimation error on unseen mixtures, it provides a principled way to predict performance and optimize the data mixture. In particular, one can approximate the optimal mixture by solving
\[
\vr^{*} \;=\; \arg\min_{\vr\in\Delta^{n-1}} \widehat{\mathcal{L}}_{\mathrm{val}},
\]
where $\widehat{\mathcal{L}}_{\mathrm{val}}(\vr)$ is the fitted approximation of validation loss. 

We note that DML models the relationship between data mixture and loss, but it requires an additional sample-complexity dimension to extrapolate to larger model sizes. Based on this observation, \cite{shukor2025scalinglawsoptimaldata} propose two model-size-aware scaling laws and select the better one according to empirical validation, which we refer to as SODM (Scaling Law for Optimal Data Mixture). Building on these works, we attempt to derive a scaling law from a capacity-aware view, as discussed below.

\subsection{Capacity-Aware Mixture Scaling Laws}
\label{sec:model_size}
We develop a capacity-aware data mixture scaling law that captures the interaction between model size and data mixtures.
Prior work shows that mixture effects depend on model scale~\citep{ gu2025datamixinginducephase, mizrahi2025language} so we explicitly incorporate model size into the scaling law.
Compared with approaches that use model size and data mixture separately to fit the law~\citep{ye2025data, kang2024autoscale}, our method yields more accurate loss prediction (Figure~\ref{fig:formula}).

\paragraph{Intrinsic domains and mixture-induced domain weights.}
Since validation data may come from real-world distributions with unknown compositions of explicit domains, we model them using intrinsic domains without requiring an explicit partition of the data. Intuitively, intrinsic domains can be viewed as a small set of latent components that explain variations in validation loss. For example, a knowledge dataset may be decomposed into components such as factual recall, reasoning, and language understanding. Following~\citep{ye2025data}, we assume that both the training data and the validation data can be expressed through $k$ intrinsic domains.
Each dataset~$j$ is modeled as a distribution over these domains: let $t_{ij}\ge 0$ denote the proportion of domain~$i$ in dataset~$j$, satisfying $\sum_{i=1}^k t_{ij}=1$ for all $j$.
We define $\vt_i=(t_{i1},\dots,t_{in})^\top\in\mathbb{R}^n$ as the domain-profile vector of intrinsic domain~$i$ across datasets.
Let $\vr\in\Delta^{n-1}$ denote a mixture over $n$ datasets, where $r_j$ is the sampling weight of dataset~$j$.
The effective weight of intrinsic domain~$i$ induced by mixture~$\vr$ is then given by
\[
\eta_i(\vr)\;:=\;\langle \vt_i,\vr\rangle \;=\;\sum_{j=1}^n t_{ij} r_j .
\]

\paragraph{Domain-wise loss model.}
We assume the training loss incurred on intrinsic domain~$i$ follows a power law~\citep{hoffmann2022training}.
Let $\tilde m_i\ge 0$ denote the capacity allocated to domain~$i$ (e.g., effective model parameters), we model the intrinsic-domain training loss as
\[
\ell_i(\tilde m_i)\;:=C_i + \;\frac{A_i}{\tilde m_i^{a_i}},
\]
where $C_i, A_i, a_i$ are parameters which capture domain-specific learning difficulty.

\paragraph{Capacity allocation objective.}
Inspired by the capacity-allocation perspective of~\citep{gu2025datamixinginducephase}, we view pretraining as a process in which a model distributes its parameter capacity across intrinsic domains.
We empirically examine this view by pretraining models of varying sizes on a fixed mixture of Math and Knowledge datasets, while explicitly monitoring the individual training loss for each domain.

As shown in Figure~\ref{fig:training}, increasing model size leads to a reduction in training loss for both datasets. 
For each domain $i$, we measure the loss improvements:
$$
\Delta_i^{(M)} = \ell_i^{\text{small}} - \ell_i^{\text{medium}}, \qquad
\Delta_i^{(L)} =\ell_i^{\text{small}} - \ell_i^{\text{large}},
$$
and compute
$$
R_i = \frac{\Delta_i^{(L)}}{\Delta_i^{(M)}}.
$$

If effective model parameters scale proportionally as the model size scales, then we will have
$$
\tilde m_i^{\text{medium}} = \lambda_M \tilde m_i^{\text{small}}, \qquad
\tilde m_i^{\text{large}} = \lambda_L \tilde m_i^{\text{small}},
$$
with shared scaling factors across domains. Under the scaling form $\ell_i(\tilde m_i) = C_i + \frac{A_i}{\tilde m_i^{\alpha_i}}$ with fixed training step, this implies
$$
R_i =
\frac{1-\lambda_L^{-\alpha_i}}{1-\lambda_M^{-\alpha_i}},
$$
which should be similar across domains when $\alpha_i$ are close.

However, in Figure 3 the observed ratios differ significantly. At step 2000, we have
$$
R_{\text{math}} \approx 2.4, \qquad
R_{\text{know}} \approx 4.3.
$$
Since $R_{\text{math}} \neq R_{\text{know}}$, this indicates that effective parameter allocation do not scale proportionally with model size. 
This disproportionate scaling behavior suggests that the \textit{effective parameters} allocated to each intrinsic domain do not simply grow linearly with the total model size, but are instead \textit{dynamically adjusted}. 

\begin{figure}[t]
\vspace{-0.1cm}
\centering
\includegraphics[width=0.46\textwidth]{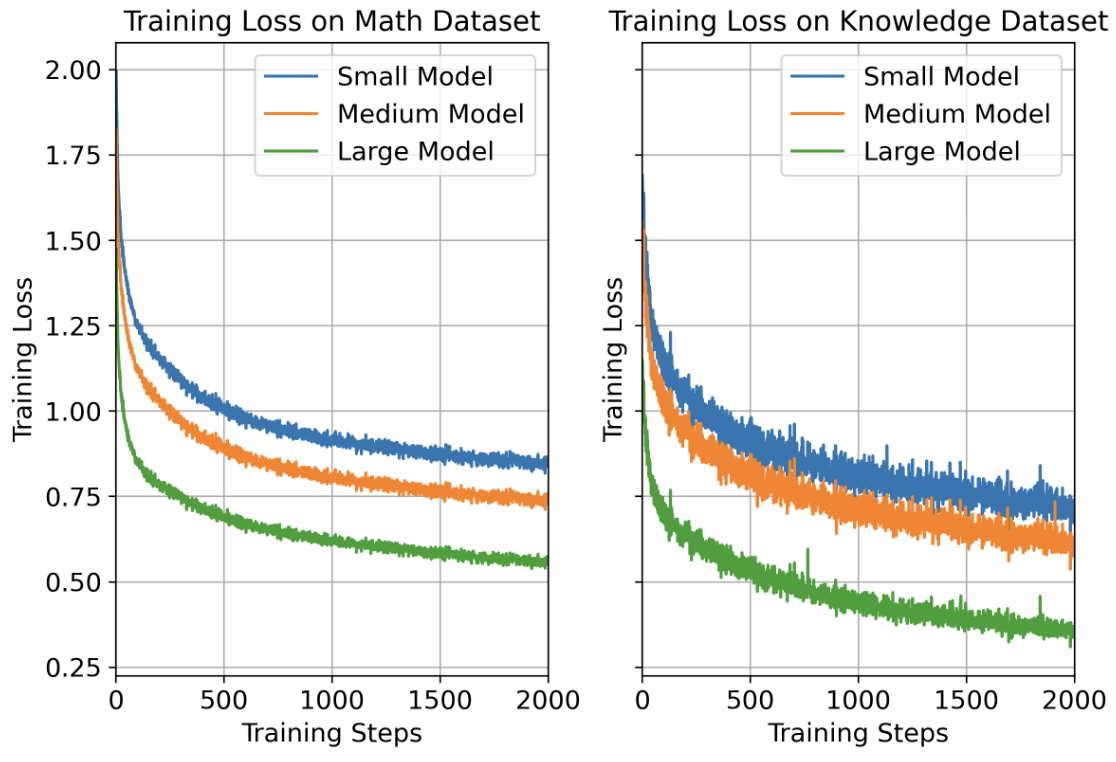}
\caption{\textbf{Training loss observations for each domain across model sizes.} We train on a mixed dataset of math and knowledge and log the training loss for each domain. While larger models reduce loss in both areas, the rates of reduction differ significantly. This non-uniform scaling implies that the \textit{effective parameters} allocated to each domain are redistributed dynamically rather than proportionally as the model scales.}
\label{fig:training}
\vspace{-0.5cm}
\end{figure}
These observations suggest an inherent capacity allocation strategy from pretraining.
We hypothesize that given a total capacity budget $M$, the training process naturally finds an optimal resource distribution.
Consequently, we model this implicit allocation as a constrained optimization problem, where the objective is to minimize the mixture-weighted sum of intrinsic losses by optimizing the effective parameters $\tilde{\bm m} = (\tilde m_1, \ldots, \tilde m_k)$ allocated to each domain:
\begin{align}
\begin{aligned}
\min_{\tilde{\bm m}}\ \Ltrain(\vr;\tilde{\bm m})
&:= \sum_{i=1}^k \eta_i(\vr)\,\ell_i(\tilde m_i) \\
&= C_{\mathrm{train}} + \sum_{i=1}^k \langle \vt_i,\vr\rangle \cdot \frac{A_i}{\tilde m_i^{a_i}}, \\
\text{s.t. }\ &\sum_{i=1}^k \tilde m_i \le M \,,
\end{aligned}\label{eq:opt-train-prob}
\end{align}
where $C_{\mathrm{train}} = \sum_{i=1}^k \langle \vt_i,\vr\rangle C_i$ represents the irreducible component of training loss.

Optimizing this problem induces a capacity allocation that depends on the data mixture and the model size, which in turn determines the validation loss. Let $\tvm^*=(\tim^*_1, \ldots, \tim^*_k) $ denote the solution to the training optimization problem.
The corresponding validation loss is given by
\begin{align}
    \Lval(\tvm^*):=C_{\mathrm{val}} + \sum_{i=1}^k w_i \cdot \frac{A_i }{(\tim_i^*)^{a_i}},\label{eq:val_loss_model}
\end{align}
where $w_i$ is the weight of the $i$-th intrinsic domain in the validation distribution. To facilitate the analysis, we introduce the following approximation assumptions.

\begin{restatable}{assumption}{SimilarAssumption}\label{a:a_similar}
We make the following assumptions about the parameters in~\eqref{eq:opt-train-prob}.

\begin{enumerate}
\item \textbf{Nearly homogeneous learning-difficulty exponents.}
There exists a common exponent $\bar a>0$ and a perturbation vector
$\bm{\varepsilon}=(\varepsilon_1,\dots,\varepsilon_k)$ such that
\[
a_i=\bar a+\varepsilon_i,
\qquad
\max_i|\varepsilon_i| \le \epsilon_a,
\]
where $\epsilon_a>0$ is sufficiently small.

\item \textbf{Stable intrinsic-domain weights.}
There exist reference weights $\bar \vr_i>0$ and perturbations
$\bm{\delta}=(\delta_1,\dots,\delta_k)$ such that
\[
\eta_i(\vr):=\langle \vt_i,\vr\rangle = \bar \vr_i(1+\delta_i),
\qquad
\max_i|\delta_i|\le \epsilon_r,
\]
where $\epsilon_r>0$ is sufficiently small.
\end{enumerate}
\end{restatable}

\begin{figure*}[t]
    \centering
\includegraphics[width=0.86\textwidth]{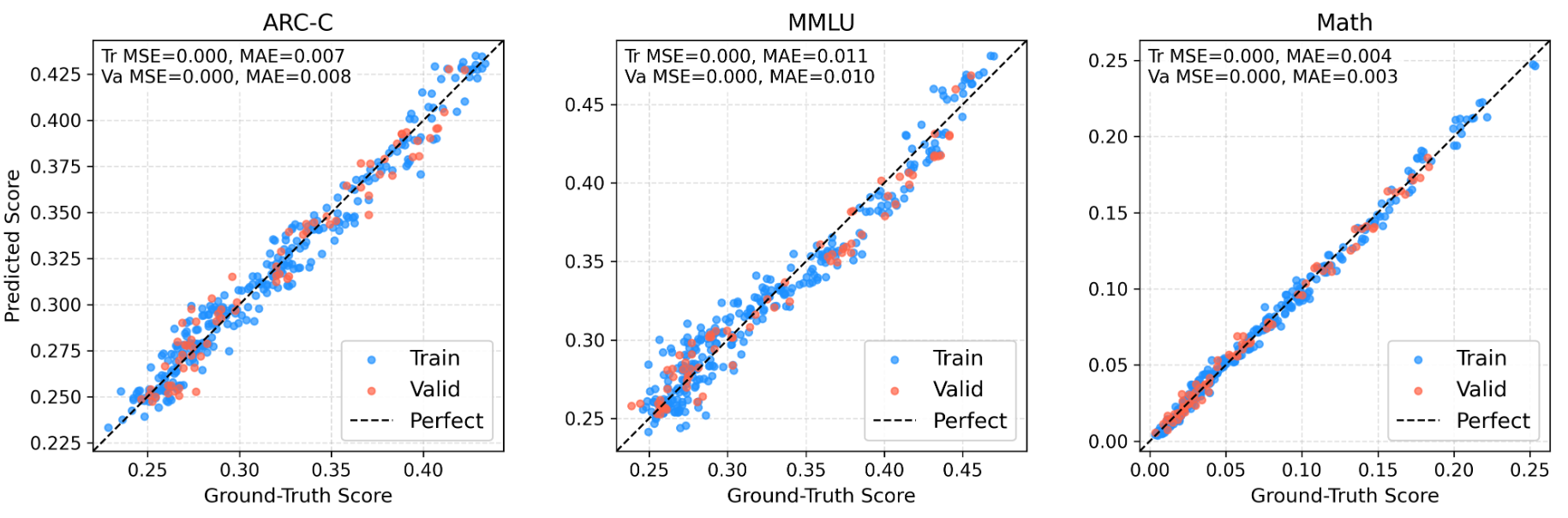}
   \caption{\textbf{Error of loss-to-benchmark prediction.} 
 We model each downstream benchmark accuracy as a function of multiple validation losses. The scatter plots show predicted versus ground-truth scores on training and validation splits. The low prediction error demonstrates that validation losses can reliably predict downstream benchmark accuracy. See Appendix~\ref{app:benchmark} for details and results on other benchmarks.}\label{fig:benchmark} \vspace{-0.3cm}
\end{figure*}

\begin{figure}[t]
\vspace{-0.1cm}
\centering
\includegraphics[width=0.36\textwidth]{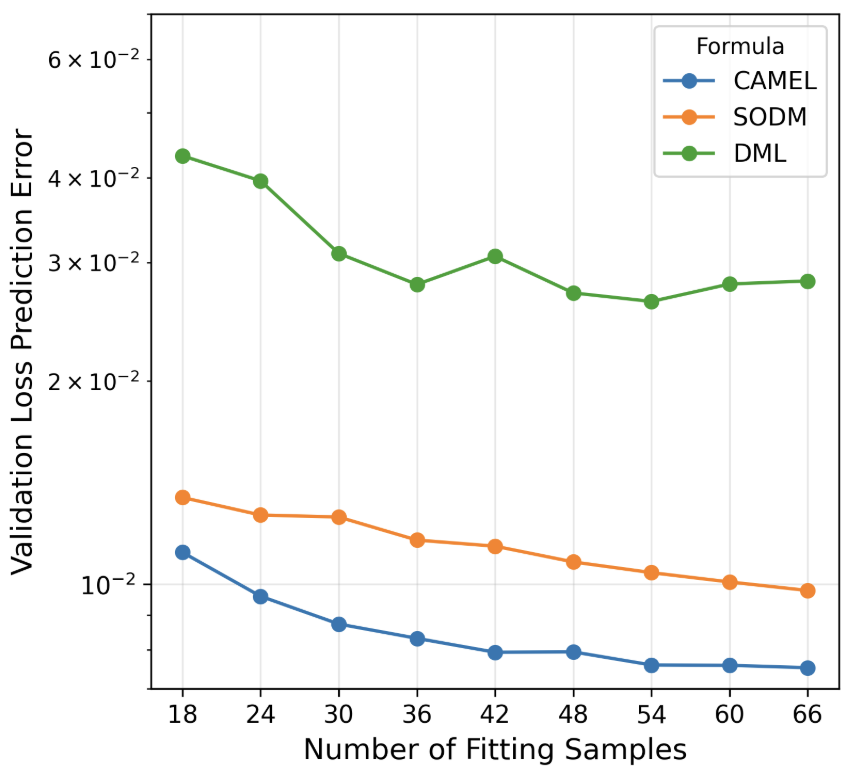}
\caption{\textbf{Comparison between CAMEL and baseline scaling laws.} We compare the fitting error of our proposed \emph{Capacity-Aware Mixture Law (CAMEL)} with two baseline methods, DML~\citep{ye2025data} and SODM~\citep{shukor2025scalinglawsoptimaldata}. CAMEL achieves consistently lower fitting error and exhibits more stable extrapolation behavior across model scales.}
\label{fig:formula}
\vspace{-0.5cm}
\end{figure}

The first assumption is motivated by the empirical findings in~\citep{hoffmann2022training}, which suggest that the learning-difficulty exponents across different domains are similar. Alternatively, considering that the exponent dictates how quickly the loss decreases, terms with significantly larger exponents vanish faster and can be disregarded. The second assumption follows the observations in~\citep{ye2025data} that the induced intrinsic-domain weights vary only mildly across different mixtures. Under these assumptions, we can establish the following result of our new law:

\begin{restatable}{theorem}{ValidationLossTheorem}\label{thm:c1}
Assume~\Cref{a:a_similar} holds. Solving the optimal allocation $\tvm^*$ for~\eqref{eq:opt-train-prob},
the validation loss can be written as
\begin{align*}
\mathcal L_{\mathrm{val}}(\vr,M)
=
C
+
\sum_{i=1}^k
\frac{K_i}{\langle \vt_i,\vr\rangle^{\alpha_i} M^{\beta_i}}
,
\end{align*}
where $C$ captures higher-order terms
under \Cref{a:a_similar} and $\alpha_i$, $\beta_i$, and $K_i$ are functions of
$\{(A_i,a_i,w_i)\}_{i=1}^k$ in~\eqref{eq:val_loss_model}.
\end{restatable}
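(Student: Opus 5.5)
We solve the allocation problem \eqref{eq:opt-train-prob} through its KKT conditions and then substitute the optimizer into \eqref{eq:val_loss_model}. Each term $\eta_i A_i \tilde m_i^{-a_i}$ is strictly decreasing in $\tilde m_i$. Hence the budget constraint is active at the optimum, $\sum_i \tim_i^* = M$. Stationarity of the Lagrangian with multiplier $\lambda>0$ gives
\[
a_i A_i \langle \vt_i,\vr\rangle (\tim_i^*)^{-a_i-1} = \lambda,
\qquad\text{so}\qquad
\tim_i^* = \Bigl(\tfrac{a_i A_i \langle \vt_i,\vr\rangle}{\lambda}\Bigr)^{1/(1+a_i)} .
\]
The objective is convex and the constraint set is convex, so this stationary point is the unique minimizer. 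The remaining unknown is $\lambda$, which is fixed implicitly by $\sum_i \tim_i^*(\lambda)=M$. Closing this equation is the whole difficulty.

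\textbf{Homogeneous case.} We first solve the unperturbed problem with $\varepsilon_i=0$ and $\delta_i=0$. Then all exponents equal $\bar a$, and $\lambda^{-1/(1+\bar a)}$ factors out of the sum. This gives the closed form
\[
\tim_i^* = M\,\frac{(\bar a A_i \eta_i)^{1/(1+\bar a)}}{\sum_j (\bar a A_j\eta_j)^{1/(1+\bar a)}} .
\]
Write $Z(\vr)=\sum_j (\bar a A_j\eta_j)^{1/(1+\bar a)}$ for the denominator. Substituting into \eqref{eq:val_loss_model}, the validation contribution of domain $i$ is
\[
w_i A_i M^{-\bar a}\,(\bar a A_i)^{-\bar a/(1+\bar a)}\,\eta_i^{-\bar a/(1+\bar a)}\,Z(\vr)^{\bar a}.
\]
This already has the form $K_i\,\langle\vt_i,\vr\rangle^{-\alpha_i}M^{-\beta_i}$, with $\alpha_i=\bar a/(1+\bar a)$ and $\beta_i=\bar a$. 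The only leftover dependence on $\vr$ is the normalizer $Z(\vr)$.

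\textbf{Removing the normalizer.} We use the second part of \Cref{a:a_similar}. Write $\eta_j=\bar\vr_j(1+\delta_j)$. Then $Z(\vr)=\bar Z\,(1+O(\epsilon_r))$, where $\bar Z$ is evaluated at the reference weights $\bar\vr$. So $Z^{\bar a}$ can be replaced by the constant $\bar Z^{\bar a}$, which is absorbed into $K_i$. The $O(\epsilon_r)$ correction is the higher-order term collected in $C$, together with $C_{\mathrm{val}}$.

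\textbf{Heterogeneous exponents.} Next we restore $a_i=\bar a+\varepsilon_i$. Following the homogeneous calculation, we would keep each domain's own exponent in its power of $\tim_i^*$. Domain $i$ then carries exponent $\alpha_i=a_i/(1+a_i)$ on $\eta_i$ and exponent $\beta_i=a_i$ on $M$. We treat the coupling through $\lambda$ perturbatively, expanding $\lambda(M,\vr)$ around its homogeneous value. The first-order shifts in $\tim_i^*$ are $O(\epsilon_a\log M)$ and $O(\epsilon_r)$ relative corrections. These are either absorbed into $K_i$ through the domain-specific exponents, or put into the higher-order term $C$.

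\textbf{Main obstacle.} The hard step is controlling the implicit equation for $\lambda$ when the $a_i$ differ. There $\lambda$ depends on $M$ in a non-separable way, and the cross terms are $O(\epsilon_a\log M)$, which is not uniformly small in $M$. The first attempt would be to set $\lambda = c\,M^{-(1+\bar a)}(1+\rho)$ and derive a fixed-point equation for $\rho$. One then shows $\rho=O(\epsilon_a+\epsilon_r)$ over the range of $M$ considered, via the implicit function theorem at $\bm\varepsilon=\bm\delta=0$. The resulting corrections are relegated to $C$, consistent with the statement's wording that $C$ collects the higher-order terms.
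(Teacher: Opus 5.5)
Your proposal is correct and follows essentially the same route as the paper. KKT stationarity with an active budget gives $\tilde m_i^\star=(a_iA_i\eta_i/\lambda)^{1/(1+a_i)}$. The budget equation is then solved to leading order as $\lambda\approx (S_0/M)^{1+\bar a}$ using both parts of the assumption (your $\bar Z$ is the paper's $S_0$). Substituting yields $\alpha_i=a_i/(1+a_i)$, $\beta_i=a_i$, $K_i\propto w_iA_i^{1/(1+a_i)}a_i^{-\alpha_i}S_0^{a_i}$, with the $O(\epsilon_r)$ and $O(\epsilon_a\log M)$ relative corrections relegated to $C$.

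The difference is only organizational. You solve the homogeneous case exactly and then perturb via an implicit-function argument for $\rho$, whereas the paper directly bootstraps the bound $|\log\lambda|\le L_0$. Your caveat that the $\epsilon_a\log M$ error is small only over a bounded range of $M$ is exactly what the $|\log(S_0/M)|$ term inside the paper's $L_0$ encodes.
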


We refer to this formulation as the \emph{Capacity-Aware Mixture Law} (CAMEL).
This law unifies the effects of data mixture and model capacity within a single expression. The proof of \Cref{thm:c1} is provided in Appendix~\ref{app:theory}.

To assess predictive accuracy, we evaluate the extrapolation performance of our law and baseline laws~\citep{ye2025data, shukor2025scalinglawsoptimaldata}. Figure~\ref{fig:formula} reports the evaluation setup and extrapolation error. Our method consistently outperforms the baselines across different training samples. Further details on the models and datasets are provided in Appendix~\ref{app:validation}.  We also conduct ablation studies on the number of intrinsic domains $k$ in Appendix~\ref{app:intrinsic} and show that our algorithm is robust to the choice of $k$.

\begin{figure*}[t] \centering \includegraphics[width=0.94\textwidth]{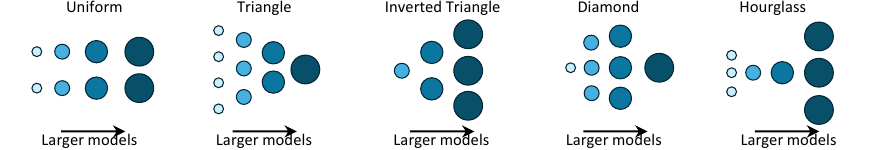} \caption{\textbf{Sampling strategies for fitting the scaling law.} We illustrate several strategies for selecting training configurations to fit the scaling law. Each subfigure corresponds to one sampling strategy, showing how combinations of model size and data mixture are chosen. Each circle represents one sampling point, and the circle area is proportional to the model size.} \label{fig:illustration} \end{figure*}

\subsection{Modeling Downstream Performance}
\label{sec:benchmark}

Existing scaling laws mainly predict validation loss, which is useful for pretraining but may not align with downstream benchmark accuracy. 
We therefore model benchmark accuracy directly as a function of validation losses.

Following~\citet{dubey2024llama, bhagia2024establishing}, we adopt a logistic form and extend it to a multi-dimensional setting. 
For benchmark $b$, we define
\begin{align}\label{eq:loss-to-score}
    \mathrm{Acc}_b(\mathbf{L}) \;=\; C_b \;+\; \frac{A_b}{1 + \exp\!\big(\mathbf{k}_b^{\top}\mathbf{L} + B_b\big)} ,
\end{align}
where $\mathbf{L}=(L_1,\ldots,L_N)^\top$ denotes validation losses from multiple datasets, and $A_b,B_b,C_b\in\mathbb{R}$, $\mathbf{k}_b\in\mathbb{R}^N$ are learnable parameters. This extension to multiple losses is motivated by the multifaceted nature of benchmarks (e.g., MMLU), which often demand a synthesis of diverse capabilities such as knowledge retention and reasoning. 


We fit~\eqref{eq:loss-to-score} on 14 popular benchmarks (see Figure~\ref{fig:benchmark} and Appendix~\ref{app:benchmark})  
and find that this law achieves low error on all benchmarks, indicating that validation losses provide sufficient signal for downstream accuracy prediction. 

To combine this law with the model-size-aware mixture scaling law, we first fit the loss-to-benchmark mapping, then define a benchmark-proxy loss
\[
\Lb(\vr,M) \;=\; \mathbf{k}_b^{\top}\mathbf{L}.
\]
We use $\Lb(\vr,M)$, rather than individual losses, to fit the mixture-to-loss law. This choice reduces the number of parameters, since accurately fitting the loss-to-benchmark law requires over 200 validation sets, and directly incorporating all individual validation losses into the scaling law would introduce a large number of parameters and significantly increase optimization cost. Moreover, expressing benchmark performance in terms of a loss proxy preserves the homogeneity properties of prior loss-based scaling laws, allowing the two formulations to be combined consistently.

Together, the loss-prediction mixture law and the loss-to-benchmark law yield an end-to-end mapping
\[
\vr \;\mapsto\; \Lb(\vr,M) \;\mapsto\; \mathrm{Acc}_b(\vr,M),
\]
which directly predicts benchmark accuracy for a candidate mixture at model size $M$.

In practice, training may prioritize different objectives. 
We therefore define a weighted benchmark objective
\[
\mathcal{A}(\vr,M;w) \;=\; \sum_{b=1}^B w_b \,\mathrm{Acc}_b(\vr,M),
\]
with user-specified weights $w_b$. 
The optimal mixture is
\[
\vr^{*}(M;w) \;=\; \arg\max_{\vr \in \Delta^{n-1}} \mathcal{A}(\vr,M;w).
\]

%% file: section/compute.tex
With the mixture-to-loss-to-benchmark scaling law, we fit parameters using data from small models and extrapolate to larger ones. Although this is cheaper than training large models, it still incurs nontrivial cost because each mixture requires a separate training run. This motivates compute-aware sampling across model scales under a fixed budget.

In practice, loss-to-benchmark data is cheap since a single training run gets multiple checkpoints and corresponding loss–accuracy pairs. In contrast, mixture-to-loss data is costly, so we focus on sampling strategies at this stage.
\subsection{Sampling Strategies}

Previous work~\citep{ye2025data, ge2024bimix, shukor2025scalinglawsoptimaldata} typically adopts a \emph{rectangle} allocation, drawing the same number of mixtures from each scale. However, under a fixed computing budget, this strategy may not be optimal. For example, allocating fewer points to larger models can free enough compute to sample more additional points at smaller scales. This leads us to a crucial question:
\emph{Which allocation strategy minimizes prediction error under limited compute?}

To address this, we compare five sampling strategies: \emph{rectangle}, \emph{triangle}, \emph{inverted triangle}, \emph{diamond}, and \emph{hourglass}, as illustrated in Figure~\ref{fig:illustration}. They are summarized as follows:

\begin{itemize}
\item \textbf{Rectangle.} Allocate points evenly across all scales, cycling from smallest to largest.
\item \textbf{Triangle.} Allocate more points to smaller models, and fewer to larger models.
\item \textbf{Inverted Triangle.} Allocate more points to larger models and fewer to smaller models.
\item \textbf{Diamond.} Allocate points starting from the middle scales and expand outward.
\item \textbf{Hourglass.} Allocate points at both extremes first and then fill toward the center.
\end{itemize}

\begin{figure}[t]
\vspace{-0.1cm}
\centering
\includegraphics[width=0.36\textwidth]{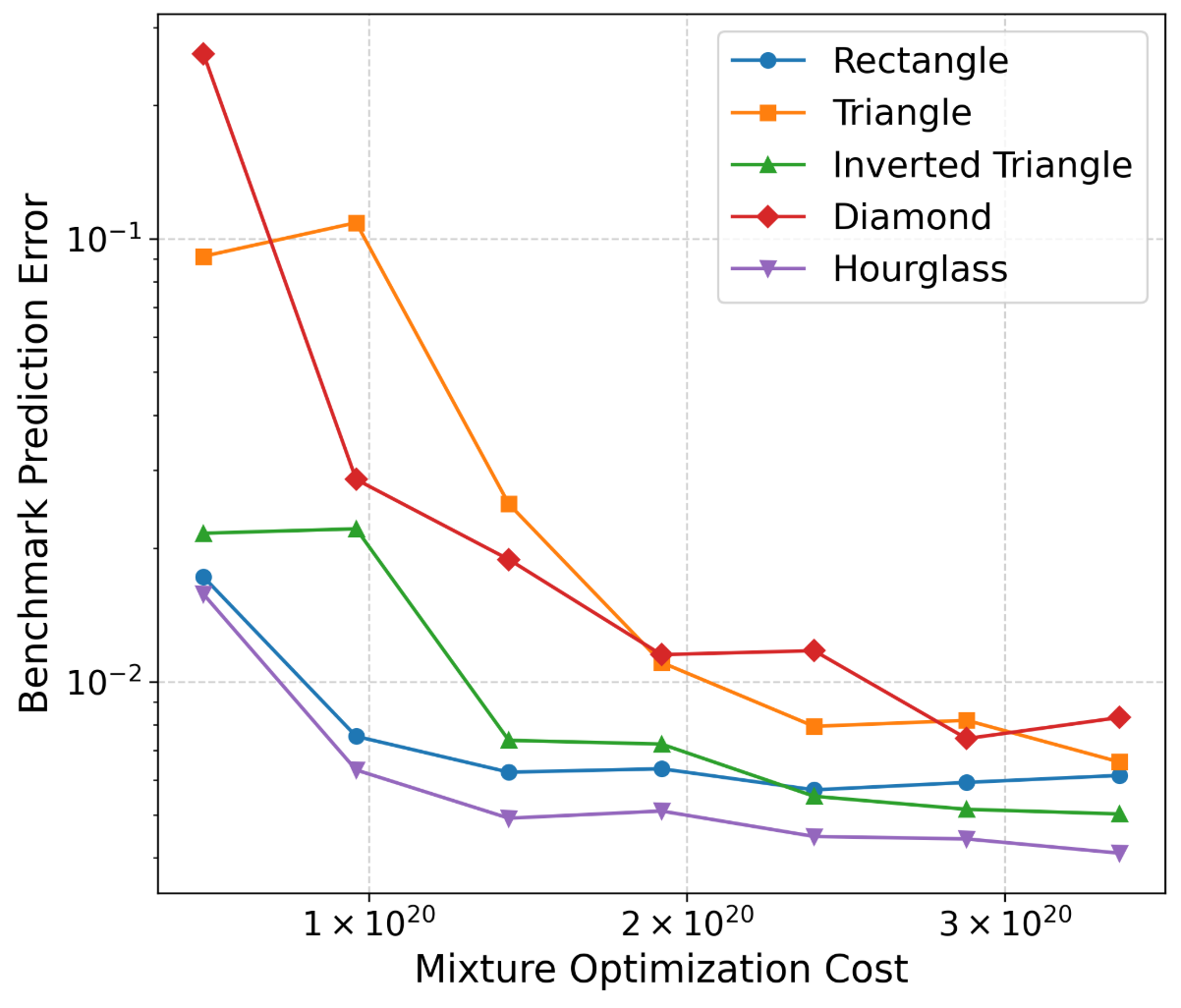}
\caption{\textbf{Comparison of sampling strategies under fixed compute.} For each expression, five strategies are compared under varying mixture optimization cost. The \textbf{Hourglass} strategy consistently achieves the lowest prediction error.}
\label{fig:policy}
\vspace{-0.3cm}
\end{figure}

\begin{table*}[t]
\centering
\resizebox{0.99\textwidth}{!}{%
\begin{tabular}{l c c c c c c c c}
\hline
\diagbox{Method}{Benchmark Scores}   & MMLU & ARC-C & BBH & GSM8K & MATH & HumanEval & C-Eval & Weighted Average Score \\
\hline
Human Designed & 0.573 & 0.535 & 0.429 & 0.652 & 0.325 & 0.354 & 0.637 &  0.496\\
Model-size agnostic~\citep{xie2023doremi} & 0.569 & 0.529 & 0.437 & 0.674 & 0.336 & 0.450 & 0.627 &  0.514\\
DML~\citep{ye2025data} &\textbf{0.579} & 0.552 &0.454 & 0.696& 0.393& 0.353 &  0.631&  0.519\\
SODM~\citep{shukor2025scalinglawsoptimaldata} & 0.566& 0.541 & 0.453& \textbf{0.705}&\textbf{0.399} & 0.421 & 0.628 &  0.526\\
\textbf{Ours} & 0.577 & \textbf{0.553} &  \textbf{0.455}& 0.693 & 0.372 & \textbf{0.451} & \textbf{0.653} &  \textbf{0.532}\\
\hline
\end{tabular}%
}
\caption{\textbf{Benchmark performance under the Balanced objective.}
Our method achieves the highest weighted average score, computed using the benchmark optimization objective weights, and performs better than baseline mixtures on most benchmarks, indicating stable extrapolation to larger model scales.}
\label{tab:benchmark}
\end{table*}

We conduct experiments with different mixture optimization cost and sample strategies. As shown in Figure~\ref{fig:policy}, the \emph{Hourglass} strategy achieves the lowest prediction error when applied with our law, indicating that default sampling methods (\emph{Rectangle}) are not optimal and more structured parameter estimation strategies can substantially reduce prediction error. Details of the experiments are in Appendix~\ref{app:sampling_strategy}.

%% file: section/experiment.tex
\begin{figure*}[t]
    \centering
\includegraphics[width=0.82\textwidth]{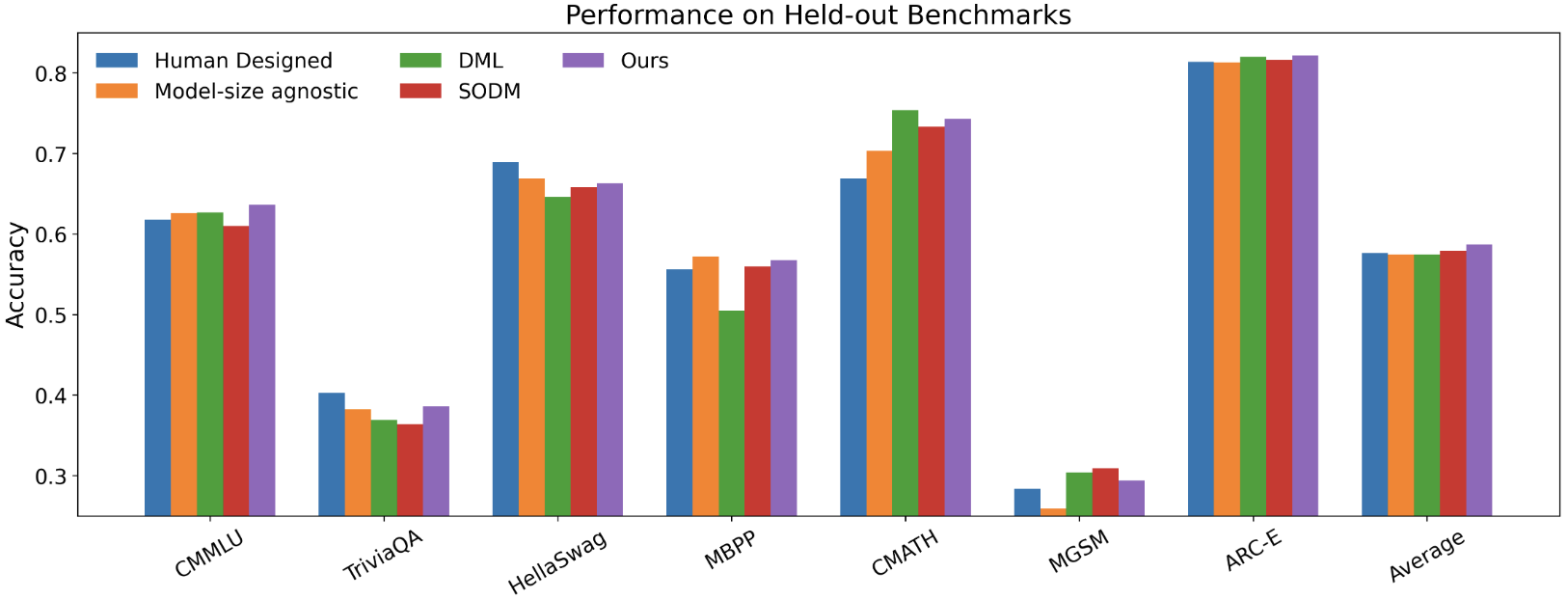}
    \caption{\textbf{Performance on held-out benchmarks.}
Our method achieves the highest average accuracy on benchmarks not used during optimization, indicating strong generalization beyond the proxy objectives.
This observation is consistent with prior work~\citep{mizrahi2025language} that mixtures optimized on diverse benchmarks generalize to unseen tasks.}
\label{fig:other-benchmark}
\vspace{-0.2cm}
\end{figure*}
\section{Evaluation on Larger Models}
\label{sec:extrapolate}
We now evaluate our scaling pipeline on larger models. The procedure samples mixture–loss and loss–benchmark pairs using a compute-aware strategy, fits the scaling law, and extrapolates it to larger model sizes to obtain the optimal mixture. The experimental setup is described below.

\subsection{Setup}
\label{sec:details}

\paragraph{Model Architecture and Training.}
We use the Deepseek V3 architecture~\citep{liu2024deepseek}, instantiated at eight scales with a shared architecture. For fitting the mixture law, we use model sizes ranging from 590M-A12M to 7B-A150M. For evaluation, we use a larger model with 55B-A1.2B parameters. Full hyperparameter specifications and training details are provided in Appendix~\ref{app:model} and~\ref{app:training}.

\paragraph{Training Dataset and Mixtures.}
Our training data includes five domains: English, Chinese, Code, Math, and Knowledge. We begin with a human-designed default mixture and create variants by upsampling or downsampling each domain. Details of dataset are given in Appendix~\ref{app:data}.

\paragraph{Benchmarks.}
We optimize data mixtures using seven benchmarks covering language understanding, reasoning, mathematics, and code generation: MMLU~\citep{hendrycks2020measuring}, ARC-C~\citep{clark2018think}, BIGBench-Hard~\citep{suzgun2023challenging}, GSM8K~\citep{cobbe2021training}, MATH~\citep{hendrycks2021measuring}, HumanEval~\citep{chen2021evaluating}, and C-Eval~\citep{huang2023c}.
To evaluate generalization, we additionally report results on held-out benchmarks, including ARC-E~\citep{clark2018think}, HellaSwag~\citep{zellers2019hellaswag}, CMMLU~\citep{li2024cmmlu}, TriviaQA~\citep{joshi2017triviaqa}, CMATH~\citep{wei2023cmath}, MBPP~\citep{austin2021program}, and MGSM~\citep{shi2023language}.

\subsection{Experimental Design}
\label{sec:trainingtarget}

We compare CAMEL with four baselines:
(i) a \textbf{model-size-agnostic} method~\citep{xie2023doremi} that optimizes mixtures on small models and transfers them to larger ones;
(ii) mixture predictions from prior scaling laws~\citep{ye2025data}, denoted as \textbf{DML};
(iii) an alternative scaling-law formulation~\citep{shukor2025scalinglawsoptimaldata}, denoted as \textbf{SODM};
and (iv) a \textbf{human-designed} mixture based on the cooldown recipe, where we directly perform grid search over many recipes on the target model and select the best-performing recipe. 

We fit end-to-end data mixture scaling laws under different training FLOPS as mixture optimization costs. CAMEL uses the \emph{Hourglass} sampling strategy, while all baselines use the default \emph{Rectangle} strategy.
Details of the fitting datasets are provided in Appendix~\ref{app:trainingset}.

We evaluate four training objectives:
\textbf{Balanced}, \textbf{Math Specialized}, \textbf{Code Specialized}, and \textbf{Knowledge Specialized}.
Each objective is defined as a weighted combination of benchmark accuracies, with weights given in Appendix~\ref{app:benchmarkweight}.
Results are reported using the \textbf{Weighted Average Score} under the corresponding objective.

\subsection{Results}
\paragraph{Performance on target benchmarks.}
Figure~\ref{fig:teaser} and Table~\ref{tab:benchmark} show that our extrapolated mixtures consistently outperform baseline methods across different compute budgets.
Under the Balanced objective, our method achieves the highest weighted average score.
These results demonstrate that CAMEL enables accurate, efficient mixture optimization.

\paragraph{Robustness across diverse training objectives.}
Beyond the Balanced objective, we evaluate CAMEL under Math, Code, and Knowledge Specialized targets.
As shown in Table~\ref{tab:target}, CAMEL consistently achieves the highest weighted average score across all targets.
This demonstrates that CAMEL reliably identifies optimal mixtures for both general-purpose and domain-specific training goals.

\begin{table}[ht]
\centering
\resizebox{0.46\textwidth}{!}{%
\begin{tabular}{l c c c }
\hline
\diagbox{Method}{Specialized Target}   & Math & Code & Knowledge  \\
\hline
Model-size agnostic~\citep{xie2023doremi} & 0.448 & 0.497 & 0.551 \\
DML~\citep{ye2025data} &0.479 & 0.500 &0.550 \\
SODM~\citep{shukor2025scalinglawsoptimaldata} & 0.494& 0.520 & 0.559 \\
\textbf{Ours} & \textbf{0.497} & \textbf{0.528} &  \textbf{0.565}\\
\hline
\end{tabular}%
}
\caption{\textbf{Benchmark performance under other targets.}
Our method achieves the highest weighted average score across all other specialized targets, outperforming the baselines and demonstrating robust mixture optimization under diverse training targets.}
\label{tab:target}
\vspace{-0.4cm}
\end{table}

\paragraph{Generalization to held-out tasks.}
We evaluate generalization on held-out benchmarks not used during optimization (Figure~\ref{fig:other-benchmark}).
CAMEL achieves competitive performance on all benchmarks and the highest average accuracy, indicating that it does not overfit the proxy objectives but instead learns mixtures that generalize across tasks.

\paragraph{Evolution of the optimal mixture with model size.}
We study how the optimal data mixture varies with model size.
Figure~\ref{fig:evolve} shows CAMEL-derived mixtures for a balanced target at different model sizes.
As model size increases, the optimal weight on Knowledge increases, while those on Math and Code decrease. This suggests that larger models absorb general knowledge more efficiently, so knowledge data should be given more weight at larger scales. This finding provides valuable guidance for determining the optimal data mixture for different model sizes.

\begin{figure}[ht]
\vspace{-0.1cm}
\centering
\includegraphics[width=0.37\textwidth]{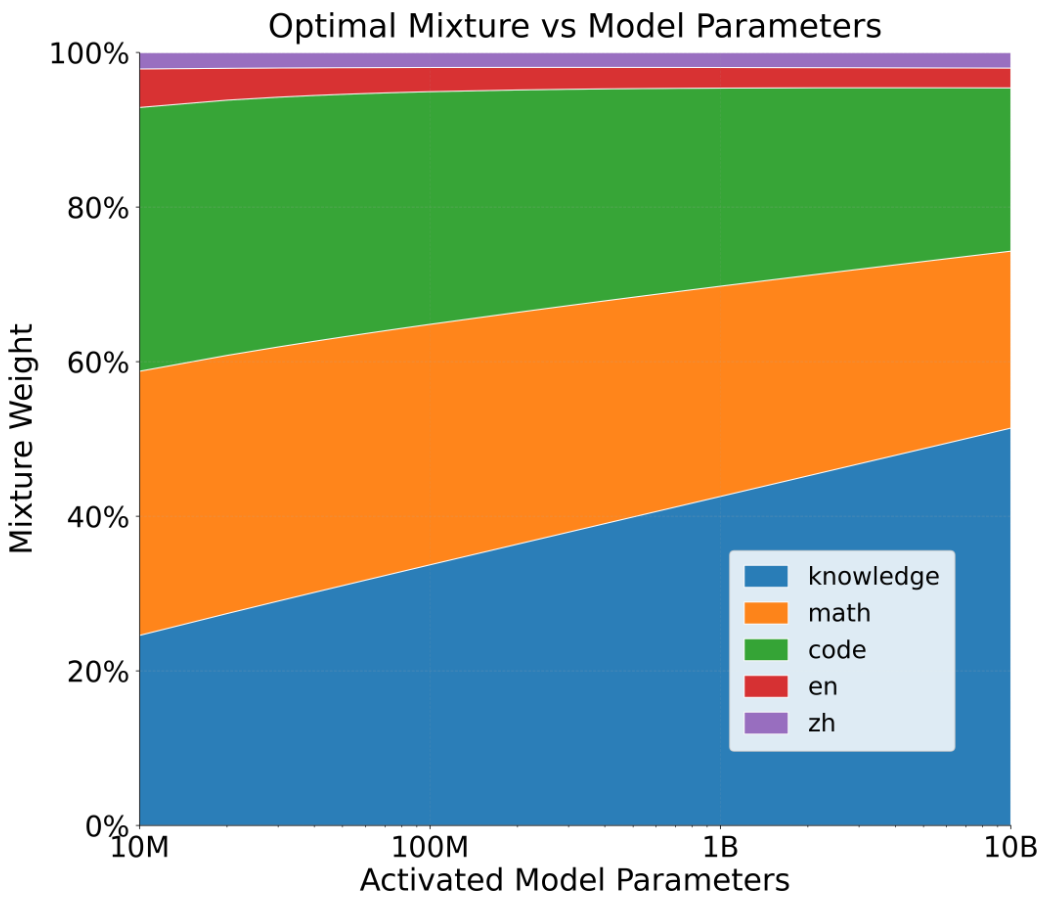}
\caption{\textbf{Optimal mixture weights across model sizes.}
The x-axis denotes activated model parameters and the y-axis shows dataset mixture weights.
As model size increases, the proportion of Knowledge data grows, while Math and Code decrease, indicating stronger knowledge absorption in larger models.}
\label{fig:evolve}
\vspace{-0.5cm}
\end{figure}

%% file: section/related_work.tex
\section{Related Work}

\paragraph{Data Mixture Optimization on Proxy Models.}
A common approach is to optimize domain weights on small proxy models and transfer them to larger training. \citet{xie2023doremi} propose DoReMi, which reweights domains using group distributionally robust optimization (Group DRO) on a proxy model. \citet{fan2024doge} refine this idea with DOGE, removing reliance on a reference model by directly estimating each domain’s contribution, yielding mixtures that transfer more reliably. \citet{liu2024regmix} adopt a predictive strategy, training regressors on small-model results to estimate unseen mixtures. Other variants cluster data into semantic groups~\citep{diao2025climb} or vectorize datasets for mixture discovery without proxies~\citep{zhang2025domain2vec}. However, \mbox{\citet{magnusson2025datadecide, gu2025datamixinginducephase}} show that mixtures optimized on small models may not scale well, underscoring the need to explicitly incorporate model size into mixture design.

\paragraph{Loss-Prediction Mixture Laws.}
Another line of work models how mixture ratios affect validation loss via parametric laws. \citet{ye2025data} and \citet{kang2024autoscale} both decouple mixture interpolation from scale extrapolation: mixture effects are fitted on small models, while extrapolation to larger models is handled separately.
Specifically, \citet{ye2025data} extrapolate to larger model sizes using a Chinchilla-style scaling law, whereas \citet{kang2024autoscale} assume that optimal mixture ratios evolve exponentially with model size. \citet{ge2024bimix} propose a bivariate law capturing the joint effect of domain ratios and training steps. For continual pretraining, \citet{gu2024cmr} and \citet{que2024d} study how to balance general and domain-specific data.
These approaches provide systematic ways to extrapolate mixture effects, but such extrapolation is achieved by introducing an additional sample-complexity dimension that requires more data to support scaling, leading to higher data and compute costs. Most closely related to our work, \citet{shukor2025scalinglawsoptimaldata} model data mixtures jointly with model size. We improve this design with a theory-motivated scaling law and systematically compare their performance in Section~\ref{sec:extrapolate}.

\paragraph{Linking Validation Loss to Benchmark Accuracy.}
Validation loss is often used as a proxy for downstream performance, though it does not always align with benchmark accuracy. This motivates end-to-end formulations that predict benchmark outcomes directly. \citet{dubey2024llama, bhagia2024establishing} fit loss scaling laws with model size and data, then apply a sigmoidal mapping to predict benchmark accuracy, leveraging both final and intermediate checkpoints. \citet{magnusson2025datadecide} adopt a similar two-step design with power-law fits and sigmoid mappings, exploring checkpoint selection strategies. \citet{gadre2024language} propose an exponential decay law directly linking loss to benchmark error, bridging perplexity-based scaling and benchmark prediction. While these methods incorporate benchmark accuracy, they do not consider data mixture design. Following~\citet{dubey2024llama}, we adopt a logistic parametric form and extend it to the multi-dimensional setting. 

%% file: section/conclusion.tex
\section{Conclusion and Future Work}

We presented CAMEL, a capacity-aware scaling law for data mixture optimization that links mixture ratios, model size, and benchmark accuracy under compute limits. CAMEL achieves lower prediction error than prior methods, and our analysis shows that structured fitting strategies such as the \emph{hourglass} policy outperform uniform sampling under fixed budgets. In practice, CAMEL achieves efficient mixture optimization, saving at least 50\% of the compute cost compared to previous baseline methods while delivering superior results. High-quality mixtures for the target model can often be found with even less than one full training pass.

Future work includes improving the formulation with alternative parametric forms or simple non-parametric models, aiming to achieve even more accurate extrapolation results. Additionally, we plan to study adaptive strategies that allocate mixture evaluations across model scales to improve efficiency under tight compute budgets.

%% file: supp_section/details.tex
\newpage
\section{General Implementation Details}
\label{app:details}
In this section, we describe the general details of our setup, such as the model architecture, dataset and training.
\subsection{Details of Model Architectures}
\label{app:model}

We construct a family of smaller models similar to Deepseek V3 architecture~\citep{liu2024deepseek}. For a given model scale $M$, the corresponding architectural hyperparameters are set as follows:

\begin{itemize}
    \item \textbf{Number of layers:} $M$.   
    \item \textbf{Number of attention heads:} $M$.
    \item \textbf{Hidden size:} $112 \times M$.
    \item \textbf{FFN intermediate size:} $32 \times M$.
    \item \textbf{Number of experts:} 384.
    \item \textbf{Experts per token:} 8.
    \item \textbf{Shared experts:} 1.
    \item \textbf{Number of Dense layers:} 1.  
    \item \textbf{Sparsity:} 48.
    \item \textbf{Expert Grouping:} No.
    
\end{itemize}

The total parameter count is computed using the \emph{activated parameter counting function}, under the convention that \texttt{layers = heads}.

\subsection{Details of Training Hyperparameters}
\label{app:training}

Training was performed using a cosine learning rate schedule with an initial learning rate of $8\times 10^{-4}$ and a minimum learning rate of $8\times 10^{-5}$. The learning rate was warmed up over the first $10^{9}$ tokens and then decayed over a total of $2\times 10^{10}$ tokens. Models were trained with a global batch size of 320, using a micro-batch size of 2 per data-parallel worker.

Optimization used the Muon optimizer~\citep{liu2025muon}, a variant of AdamW, with $\beta_2=0.95$ and weight decay set to 0.1. Gradients were clipped to a maximum norm of 1.0. Training was conducted in bfloat16 precision, with selected operations such as gradient accumulation performed in fp32 for numerical stability. Dropout was disabled for both attention and hidden states.

\subsection{Details of Datasets}
\label{app:data}

Our mid-training dataset consists of five broad domains, each targeting a distinct capability of large language model training:

\begin{itemize}
\item \textbf{English data.}
Cleaned and deduplicated English web and narrative corpora, covering general knowledge, factual content, and diverse writing styles.

\item \textbf{Chinese data.}
Curated Chinese web and literary sources, including news articles and long-form texts, providing broad coverage of Chinese language usage.

\item \textbf{Knowledge data.}
Educational and reference-oriented corpora such as textbooks, academic materials, and structured knowledge resources, collected from publicly available sources to strengthen factual accuracy and conceptual understanding.

\item \textbf{Code data.}
High-quality programming corpora across multiple languages, including open-source repositories and synthetic code examples, designed to support code generation and program understanding.

\item \textbf{Math data.}
Mathematics-focused datasets consisting of problem–solution pairs, reasoning-heavy questions, and synthetic examples, with emphasis on mathematical reasoning and symbolic manipulation.
\end{itemize}

These domains are combined to form a unified mid-training corpus that jointly supports general language modeling, knowledge acquisition, programming ability, and mathematical reasoning.

For each model scale, training proceeds in two stages. We first pretrain the model up to its compute-optimal number of tokens, as estimated by scaling laws. This is followed by a cooldown stage with an additional 20B tokens, during which the data mixture is systematically varied.

Specifically, we begin from a human-designed reference mixture
$\vr = (r_1, r_2, r_3, r_4, r_5)$ over the five domains.
To study the effect of mixture variations, we construct a set of perturbed mixtures by individually adjusting the proportion of each domain.
For each domain $i$, we increase $r_i$ by 20\% and renormalize the remaining domains proportionally so that the mixture sums to one.
Similarly, we decrease $r_i$ by 20\% and renormalize the remaining domains accordingly.
Together with the reference mixture, this results in a total of 11 distinct data mixtures for each model scale.

These mixtures are used either to fit the proposed scaling law or to evaluate its predictive performance.

%% file: supp_section/experiment_of_sec3.tex
\section{Experimental Details of Section~\ref{sec:law}}
This section provides additional details on the experimental setup used in Section~\ref{sec:law}.

\subsection{Prediction of Validation Loss}
\label{app:validation}
We consider a collection of models trained at different scales $M$, each with a fixed set of mixture ratios. Smaller models are used to fit the scaling laws, while a larger model is held out for evaluation. At each training scale, we evaluate multiple mixture ratios $r$, resulting in a set of $(r, M)$ pairs for both training and testing. The training set includes models with $M \in \{4,5,6,7,8,10\}$, yielding $6 \times 11 = 66$ training points in total. The held-out test scale is $M = 12$.

To evaluate predictive performance, we vary the number of training points as $n \in \{18,24,30,36,42,48,54,60,66\}$ by uniformly sampling from the full training set. For each value of $n$, we repeat the sampling procedure 20 times and average the results to reduce variance. Extrapolation performance is then evaluated on the held-out test set.

\subsection{Ablation Study of Intrinsic Domains}
\label{app:intrinsic}

We conduct an ablation study on the number of intrinsic domains. The results are shown in Figure~\ref{fig:intrinsic-domain}.

\begin{figure}[ht]
 \centering
 \includegraphics[scale=0.35]{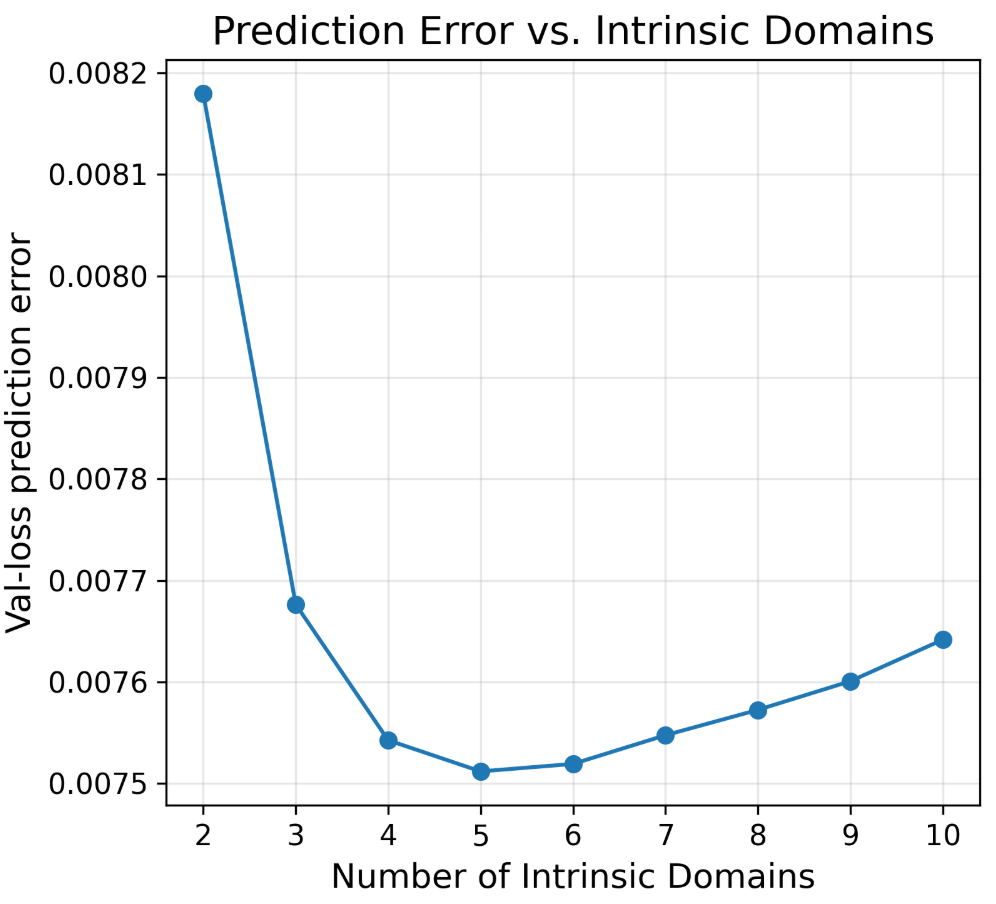}
 \caption{\textbf{Ablation study of intrinsic domains.}
 We vary the number of intrinsic domains $k$ and evaluate the validation-loss prediction error. When $k$ is small, the model has limited capacity, resulting in higher prediction error. As $k$ increases, the error decreases and reaches its minimum around $k=5$. Further increasing $k$ leads to higher error, indicating overfitting and reduced robustness.}
 \label{fig:intrinsic-domain}
 \vspace{-0.4cm}
\end{figure}

As shown in Figure~\ref{fig:intrinsic-domain}, the optimal number of intrinsic domains is $k=5$. When $k$ is too small, the decomposition cannot adequately capture the diversity of mixture effects. In contrast, an excessively large $k$ fragments the data across domains, which increases variance and degrades prediction accuracy due to overfitting. Therefore, we choose $k=5$ in our law.

\subsection{Prediction of Benchmark Accuracy}
\label{app:benchmark}

We collect 100 checkpoints spanning multiple model scales and training steps. 
For each checkpoint, we record the validation losses on $N$ held-out validation sets together with the corresponding downstream benchmark accuracies. 
The validation losses are used as input features, while benchmark accuracy serves as the prediction target. 
The model is trained on 80 data points and evaluated on the remaining 20.

We further evaluate the formulation on a broad set of additional benchmarks covering language understanding, reasoning, mathematics, and code generation. 
Specifically, we consider
MMLU~\citep{hendrycks2020measuring},
ARC-C and ARC-E~\citep{clark2018think},
BBH~\citep{suzgun2023challenging},
GSM8K~\citep{cobbe2021training},
MATH~\citep{hendrycks2021measuring},
HumanEval~\citep{chen2021evaluating},
C-Eval~\citep{huang2023c},
HellaSwag~\citep{zellers2019hellaswag},
CMMLU~\citep{li2024cmmlu},
TriviaQA~\citep{joshi2017triviaqa},
CMATH~\citep{wei2023cmath},
MBPP~\citep{austin2021program},
and MGSM~\citep{shi2023language}. 
Together, these benchmarks cover a wide range of skills, including factual recall, commonsense reasoning, professional knowledge, multi-step mathematical reasoning, and program synthesis.

As shown in Figure~\ref{fig:app-benchmark}, the predicted accuracies produced by the fitted law closely match the observed results, resulting in consistently low prediction error. 
This indicates that the formulation generalizes well across diverse benchmarks and that the learned coefficients reliably capture the relationship between validation loss and downstream benchmark accuracy.

\begin{figure}[ht]
 \centering
 \includegraphics[width=0.8\textwidth]{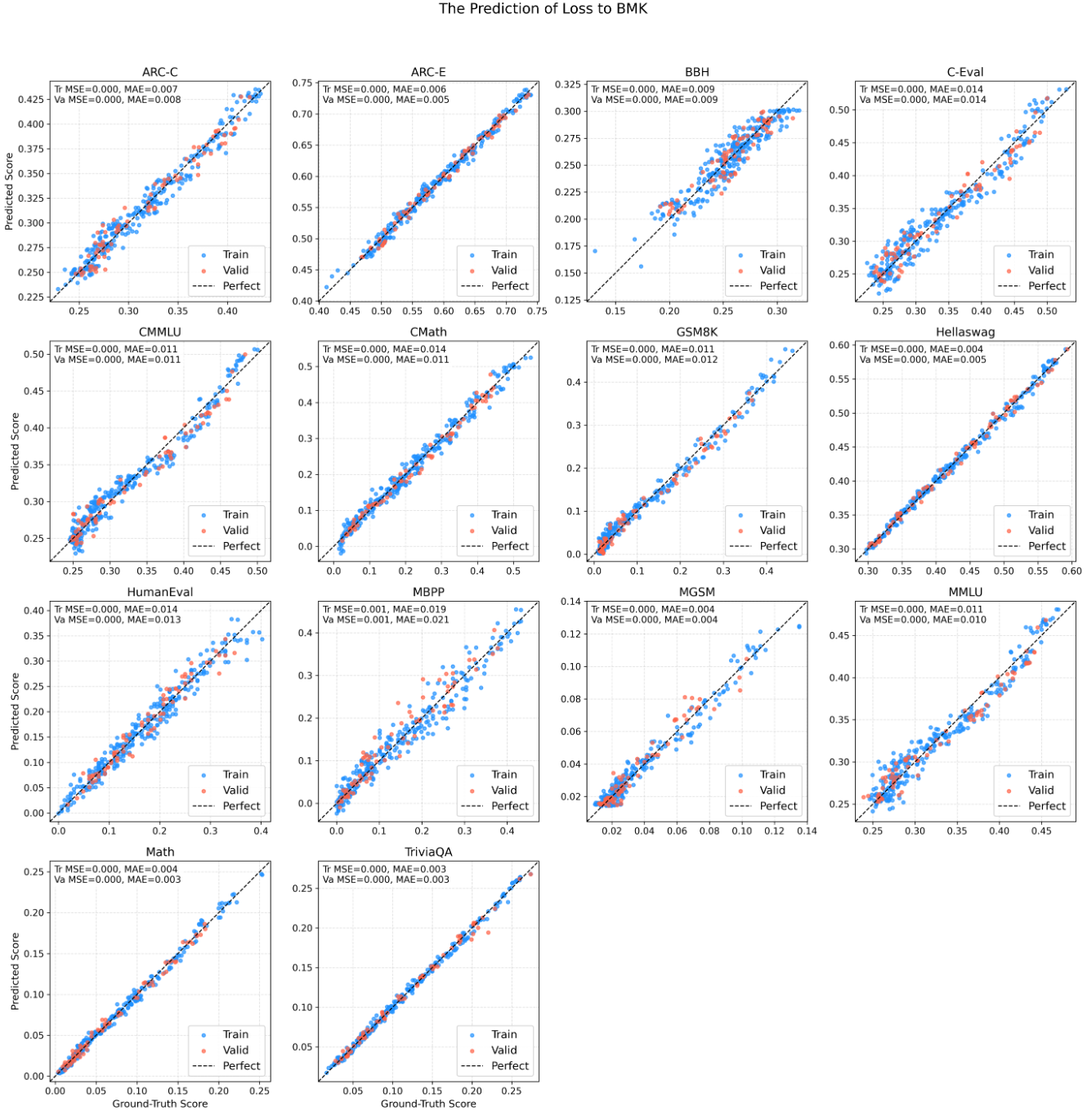}
 \caption{\textbf{Predictions of all benchmarks.}
 This figure shows predicted versus observed accuracies on all benchmarks.
 The close alignment indicates strong generalization of the proposed law.}
 \label{fig:app-benchmark}
 \vspace{-0.4cm}
\end{figure}

\subsection{Experiments with Different Sampling Strategies}
\label{app:sampling_strategy}

We use the same training and test datasets as described in Appendix~\ref{app:validation}. 
For a given training budget, each sampling strategy first determines how many mixture configurations are selected at each model scale. 
Given this allocation, we then uniformly sample the specified number of mixtures from the 11 available mixture configurations at each model scale. 
This sampling process is repeated 10 times independently, and results are averaged to reduce sampling variance.

For each run, all sampled $(r, M)$ pairs across model scales are used to fit the scaling law. 
Extrapolation performance is evaluated by computing the prediction error on the held-out test scale. 
We report the average test error as well as the average number of training points per model scale over the 10 runs.

Training FLOPs are estimated as $6ND$, where $N$ denotes the number of model activated parameters and $D$ denotes the number of training tokens.

%% file: supp_section/experiment_of_sec5.tex
\section{Experimental Details of Section~\ref{sec:extrapolate}}

This section provides additional details on the experimental setup used in Section~\ref{sec:extrapolate}.
\subsection{Details of Training Sets}
\label{app:trainingset}

We consider a set of models trained at different scales $M$, each with a fixed set of mixture ratios. At each training scale, we evaluate multiple mixture ratios $r$, yielding a collection of $(r, M)$ pairs used to fit the law. 
In particular, the scaling law is fitted using models with the same setup in Appendix~\ref{app:validation}. 

After fitting the scaling law, we extrapolate to the held-out scale $M = 20$. 
For each target configuration described in Section~\ref{sec:extrapolate} (e.g., \textit{Balanced}, \textit{Math-specialized}), we construct a weighted objective by combining benchmark-specific predictions according to the corresponding target weights. 
We then substitute the target model scale $M = 20$ and the benchmark weights into the fitted law, and solve the resulting optimization problem to obtain the predicted optimal mixture ratios for that target configuration.

The predicted mixture is subsequently used to train a model at scale $M = 20$. 
We evaluate the trained model on downstream benchmarks and compare the resulting accuracies against those obtained by baseline mixture-selection methods.

\subsection{Details of Benchmark Weights}
\label{app:benchmarkweight}

To construct the four training objectives described in Section~\ref{sec:trainingtarget}, we assign different weights to evaluation benchmarks according to the target domain.
The benchmarks considered include \textbf{MMLU}~\citep{hendrycks2020measuring}, 
\textbf{ARC-Challenge}~\citep{clark2018think}, 
\textbf{BBH}~\citep{suzgun2023challenging}, 
\textbf{GSM8K}~\citep{cobbe2021training}, 
\textbf{MATH}~\citep{hendrycks2021measuring}, 
\textbf{HumanEval (pass@1)}~\citep{chen2021evaluating}, 
and \textbf{CEval}~\citep{huang2023c}.

The \textit{Balanced} configuration is designed to provide broad coverage across general knowledge, reasoning, mathematics, and code generation.
We treat \textbf{CEval} and \textbf{ARC-Challenge} as proxies for Chinese and English factual and linguistic understanding, respectively, and assign them equal weights to maintain cross-lingual balance.
\textbf{MMLU} and \textbf{BBH} capture broad academic knowledge and multi-task reasoning ability, and are therefore assigned slightly higher weights.
The remaining benchmarks, namely \textbf{GSM8K}, \textbf{MATH}, and \textbf{HumanEval}, focus on numerical reasoning and program synthesis and receive the remaining weight mass. Table~\ref{tab:benchmark-weights} summarizes the exact benchmark weights.
This weighting scheme yields a stable, domain-agnostic training objective that avoids overemphasis on any single benchmark.

\begin{table}[ht]
\centering
\small
\begin{tabular}{lccccccc}
\toprule
Objective & MMLU & ARC-Challenge & BBH & GSM8K & Math & HumanEval (pass@1) & CEval \\
\midrule
Balanced    & 0.20 & 0.10 & 0.15 & 0.15 & 0.15 & 0.15 & 0.10 \\
Math Specialized       & 0.12 & 0.08 & 0.12 & 0.15 & 0.30 & 0.15 & 0.08 \\
Code Specialized       & 0.10 & 0.10 & 0.10 & 0.15 & 0.15 & 0.25 & 0.15 \\
Knowledge Specialized & 0.30 & 0.16 & 0.14 & 0.10 & 0.04 & 0.04 & 0.22 \\
\bottomrule
\end{tabular}
\caption{Weights assigned to benchmarks under different training objectives. 
Each objective emphasizes a specific domain: the \emph{Balanced} objective balances overall performance, 
the \emph{Math Specialized} objective prioritizes mathematical reasoning, 
the \emph{Code Specialized} objective emphasizes programming ability and the \emph{Knowledge Specialized} objective emphasizes general knowledge ability.}
\label{tab:benchmark-weights}
\end{table}

%% file: supp_section/theory.tex
\section{Proof of Theorem~\ref{thm:c1}}\label{app:theory}
In this section, we present the proof of our main theorem.
\ValidationLossTheorem*

\begin{proof}
We prove the theorem in three steps.

\paragraph{Step 1: KKT characterization.}
Consider the training-loss optimization of~\eqref{eq:opt-train-prob}. Its Lagrangian function is
\[
\mathcal L(\vr;\tvm;\lambda)
=
C_{\mathrm{train}}+\sum_{i=1}^k \frac{A_i \eta_i(\vr)}{\tilde m_i^{a_i}}
+\lambda\Big(\sum_{i=1}^k \tilde m_i - M\Big),
\qquad \lambda\ge 0.
\]
Since each term $A_i\eta_i(\vr)/\tilde m_i^{a_i}$ is strictly decreasing in $\tilde m_i>0$, any optimum must use the full budget, hence $\sum_{i=1}^k \tilde m_i=M$ and therefore by the complementary slackness condition we have $\lambda>0$.
Under the condition $\tilde m_i^\star>0$ for all $i$, the KKT stationarity condition gives, for each $i$,
\[
\frac{\partial \mathcal L}{\partial \tilde m_i}
=
-\frac{a_i A_i \eta_i(\vr)}{\tilde m_i^{a_i+1}}+\lambda
=0,
\]
so we have
\begin{equation}\label{eq:mi_star_from_lambda_rig}
\tilde m_i^\star
=
\Big(\frac{a_iA_i\eta_i(\vr)}{\lambda}\Big)^{\frac{1}{a_i+1}}.
\end{equation}
Plugging into the full budget constraint we get
\begin{equation}\label{eq:constraint_active_rig}
\sum_{i=1}^k
\Big(\frac{a_iA_i\eta_i(\vr)}{\lambda}\Big)^{\frac{1}{a_i+1}}
= M.
\end{equation}

\paragraph{Step 2: Solving for $\lambda$ with Assumption~\ref{a:a_similar}.}
Denote
$p := \frac{1}{\bar a+1}$.
By Assumption~\ref{a:a_similar}(1), for each $i$,
\[
\frac{1}{a_i+1}
=
\frac{1}{\bar a+1+\varepsilon_i}
=
p + \Delta_i,
\qquad
|\Delta_i|\le C\,\epsilon_a,
\]
for a constant $C$ that depends only on $\bar a$.

Define
\[
x_i := a_iA_i\eta_i(\vr),
\qquad
L_x := \max_{1\le i\le k} |\log x_i|.
\]
Then we have
\[
x_i^{p+\Delta_i} = x_i^p \exp(\Delta_i \log x_i)
= x_i^p\Big(1+O(\epsilon_a L_x)\Big),
\]
uniformly over $i$. Similarly, defining $L_\lambda := |\log \lambda|$, we have
\[
\lambda^{-(p+\Delta_i)}
=
\lambda^{-p}\exp(-\Delta_i \log \lambda)
=
\lambda^{-p}\Big(1+O(\epsilon_a L_\lambda)\Big),
\]
uniformly over $i$. Substituting these into~\eqref{eq:constraint_active_rig} gives
\begin{align}
M
&=
\sum_{i=1}^k x_i^{p+\Delta_i}\lambda^{-(p+\Delta_i)}
=
\lambda^{-p}\sum_{i=1}^k x_i^p
\Big(1+O(\epsilon_a(L_x+L_\lambda))\Big).
\label{eq:M_expand_rig}
\end{align}
Rearranging yields
\begin{equation}\label{eq:lambda_p_general_rig}
\lambda^p
=
\frac{\sum_{i=1}^k x_i^p}{M}
\Big(1+O(\epsilon_a(L_x+L_\lambda))\Big).
\end{equation}

By the definition of $\eta_i(\vr)$ and Assumption~\ref{a:a_similar}(2), we have
\[
\eta_i(\vr)=\bar \vr_i(1+\delta_i(\vr)),
\qquad
\max_i|\delta_i(\vr)|\le \epsilon_r.
\]
Then for each $i$,
\[
x_i^p=(a_iA_i\eta_i(\vr))^p
=(a_iA_i\bar \vr_i)^p(1+\delta_i(\vr))^p
=(a_iA_i\bar \vr_i)^p\Big(1+O(\epsilon_r)\Big),
\]
uniformly over $i$. Hence we get
\begin{equation}\label{eq:sum_xp_rig}
\sum_{i=1}^k x_i^p
=
S_0\Big(1+O(\epsilon_r)\Big),
\qquad
S_0:=\sum_{i=1}^k (a_iA_i\bar \vr_i)^p.
\end{equation}
Combining~\eqref{eq:lambda_p_general_rig} and~\eqref{eq:sum_xp_rig}, we finally get
\begin{equation}\label{eq:lambda_p_rig}
\lambda^p
=
\frac{S_0}{M}\Big(1+O(\epsilon_r)+O(\epsilon_a(L_x+L_\lambda))\Big).
\end{equation}

From~\eqref{eq:lambda_p_rig}, for sufficiently small $\epsilon_a,\epsilon_r$,
\[
\qquad
|\log \lambda|
\le C_1 + \frac{1}{p}\,|\log(S_0/M)|
=: L_0,
\]
where $C_1$ depends only on $\bar a$. Plugging this back into~\eqref{eq:lambda_p_rig}, we have
\[
\lambda^p
=
\frac{S_0}{M}\Big(1+O(\epsilon_r)+O(\epsilon_a(L_x+L_0))\Big).
\]
so we can finally get the solution of $\lambda$:
\begin{equation}\label{eq:lambda_final_rig}
\lambda
=
\Big(\frac{S_0}{M}\Big)^{\bar a+1}
\Big(1+O(\epsilon_r)+O(\epsilon_a(L_x+L_0))\Big).
\end{equation}

\paragraph{Step 3: Representing validation loss as a function of $(\vr,M)$.}
From~\eqref{eq:mi_star_from_lambda_rig}, we have
\[
(\tilde m_i^\star)^{-a_i}
=
(a_iA_i)^{-\frac{a_i}{a_i+1}}
\eta_i(\vr)^{-\frac{a_i}{a_i+1}}
\lambda^{\frac{a_i}{a_i+1}}.
\]
Define
\[
\alpha_i := \frac{a_i}{a_i+1}.
\]
Then the $i$-th term in validation loss becomes
\begin{equation}\label{eq:val_term_rig}
\frac{A_i w_i}{(\tilde m_i^\star)^{a_i}}
=
w_i A_i^{\frac{1}{a_i+1}} a_i^{-\frac{a_i}{a_i+1}}
\cdot
\eta_i(\vr)^{-\alpha_i}
\cdot
\lambda^{\alpha_i}.
\end{equation}
Using~\eqref{eq:lambda_final_rig}, we get
\[
\lambda^{\alpha_i}
=
\Big(\Big(\frac{S_0}{M}\Big)^{\bar a+1}\Big)^{\alpha_i}
\Big(1+O(\epsilon_r)+O(\epsilon_a(L_x+L_0))\Big).
\]
Let $
\Delta_i' := (\bar a+1)\alpha_i - a_i$, 
we have the following relation: 
\[
\Delta_i' = \frac{\bar a-a_i}{a_i+1},
\qquad
|\Delta_i'|\le C_2\,\epsilon_a,
\]
for a constant $C_2$ depending only on $\bar a$. Therefore,
\begin{align*}
\Big(\frac{S_0}{M}\Big)^{(\bar a+1)\alpha_i}
&=
\Big(\frac{S_0}{M}\Big)^{a_i+\Delta_i'}
=
\Big(\frac{S_0}{M}\Big)^{a_i}\exp\!\big(\Delta_i'\log(S_0/M)\big) \\
&=
S_0^{a_i}M^{-a_i}\Big(1+O(\epsilon_a|\log(S_0/M)|)\Big).
\end{align*}
Combining the equilies, we  yields
\begin{equation}\label{eq:lambda_alpha_rig}
\lambda^{\alpha_i}
=
S_0^{a_i}M^{-a_i}
\Big(1+O(\epsilon_r)+O(\epsilon_a(L_x+L_0))\Big).
\end{equation}

Plugging~\eqref{eq:lambda_alpha_rig} into~\eqref{eq:val_term_rig}, we obtain
\[
\frac{A_i w_i}{(\tilde m_i^\star)^{a_i}}
=
\frac{
w_i A_i^{\frac{1}{a_i+1}}
a_i^{-\frac{a_i}{a_i+1}}
S_0^{a_i}
}{
\eta_i(\vr)^{\alpha_i} M^{a_i}
}
+
O\!\big(\epsilon_r+\epsilon_a(L_x+L_0)\big).
\]

Summing over $i=1,\dots,k$, we obtain the final representation of the validation loss:
\[
\mathcal L_{\mathrm{val}}(\vr,M)
=
C
+
\sum_{i=1}^k
\frac{K_i}{\eta_i(\vr)^{\alpha_i} M^{\beta_i}} = C
+
\sum_{i=1}^k
\frac{K_i}{\langle \vt_i,\vr\rangle^{\alpha_i} M^{\beta_i}}
\]
where
\[
\alpha_i=\frac{a_i}{a_i+1},
\qquad
\beta_i=a_i,
\qquad
K_i=w_iA_i^{\frac{1}{a_i+1}}a_i^{-\frac{a_i}{a_i+1}}S_0^{a_i}, \qquad
C
=
C_{\mathrm{val}}
+
O\!\big(\epsilon_r+\epsilon_a(L_x+L_0)\big).
\]

\end{proof}